\newcommand{\vf}{V^\theta_\gamma}
\newcommand{\qf}{Q^\theta_\gamma}
\newcommand{\af}{A^\theta_\gamma}
\newcommand{\df}{d^\theta_\gamma}
\newcommand{\jq}{\nabla J_?(\theta)}
\title{Is the Policy Gradient a Gradient?}
\author{Chris Nota}
\affiliation{%
 \department{College of Information and Computer Sciences}
 \institution{University of Massachusetts Amherst}
}
\email{cnota@cs.umass.edu}
\author{Philip S. Thomas}
\affiliation{%
 \department{College of Information and Computer Sciences}
 \institution{University of Massachusetts Amherst}
}
\email{pthomas@cs.umass.edu}
\begin{abstract}
    The policy gradient theorem describes the gradient of the expected discounted return with respect to an agent's policy parameters.
    However, most policy gradient methods drop the discount factor from the state distribution and therefore do not optimize the discounted objective.
    What do they optimize instead?
    This has been an open question for several years, and this lack of theoretical clarity has lead to an abundance of misstatements in the literature.
    We answer this question by proving that the update direction approximated by most methods is not the gradient of any function.
    Further, we argue that algorithms that follow this direction are not guaranteed to converge to a ``reasonable'' fixed point by constructing a counterexample wherein the fixed point is globally \emph{pessimal} with respect to both the discounted and undiscounted objectives.
    We motivate this work by surveying the literature and showing that there remains a widespread misunderstanding regarding discounted policy gradient methods, with errors present even in highly-cited papers published at top conferences.
\end{abstract}
\begin{document}
\maketitle

\section{Introduction}
\label{sec:intro}

\emph{Reinforcement learning} (RL) is a subfield of machine learning in which computational \emph{agents} learn to maximize a numerical \emph{reward} signal through interaction with their environment.
\emph{Policy gradient methods} encode an agent's behavior as a parameterized stochastic \emph{policy} and update the policy parameters according to an estimate of the gradient of the expected sum of rewards (the expected  \emph{return}) with respect to those parameters.
In practice, estimating the effect of a particular action on rewards received far in the future can be difficult, so almost all state-of-the-art implementations instead consider an exponentially discounted sum of rewards (the \emph{discounted} return), which shortens the effective horizon considered when selecting actions.
The \emph{policy gradient theorem} \cite{sutton2000policy} describes the appropriate update direction for this discounted setting.
However, almost all modern policy gradient algorithms deviate from the original theorem by dropping one of the two instances of the discount factor that appears in the theorem. 
It has been an open question for several years as to whether these algorithms are unbiased with respect to a different, related objective \cite{thomas2014bias}.
In this paper, we answer this question and prove that most policy gradient algorithms, including state-of-the-art algorithms, do not follow the gradient of \emph{any} function.
Further, we show that for some tasks, the fixed point of the update direction followed by these algorithms is pessimal, regardless of whether the discounted or undiscounted objective is considered.

The analysis in this paper applies to nearly all state-of-the-art policy gradient methods.
In Section \ref{sec:lit-review}, we review all of the policy gradient algorithms included in the popular \texttt{stable-baselines} repository \cite{stable-baselines} and their associated papers, including A2C/A3C \cite{mnih2016asynchronous}, ACER \cite{wang2017sample}, ACKTR \cite{wu2017scalable}, DDPG \cite{lillicrap2015continuous}, PPO \cite{schulman2017proximal}, TD3 \cite{fujimoto2018addressing}, TRPO \cite{schulman2015trust}, and SAC \cite{haarnoja2018sac}.
We motivate this choice in Section \ref{sec:lit-review}, but we note that all of these papers were published at top conferences\footnote{ICML, NeurIPS, or ICLR, with the exception of PPO, which appears to have been published only on arXiv.} and have received hundreds or thousands of citations.
We found that all of the implementations of the algorithms used the ``incorrect'' policy gradient that we discuss in this paper.
While this is a valid algorithmic choice if properly acknowledged, we found that only \emph{one} of the eight papers acknowledged this choice, while three of the papers made erroneous claims regarding the discounted policy gradient and others made claims that were misleading.
The purpose of identifying these errors is not to criticize the authors or the algorithms, but to draw attention to the fact that confusion regarding the behavior of policy gradient algorithm exists at the very core of the RL community and has gone largely unnoticed by reviewers.
This has led to a proliferation of errors in the literature.
We hope that by providing definitive answers to the questions associated with these errors we are able to improve the technical precision of the literature and contribute to the development of a better theoretical understanding of the behavior of reinforcement learning algorithms.

\section{Notation}
\label{sec:notation}

RL agents learn through interactions with an environment.
An environment is expressed mathematically as a \emph{Markov decision process} (MDP).
An MDP is a tuple, $(\mathcal S, \mathcal A, P, R, d_0, \gamma)$, 
where $\mathcal S$ is the set of possible \emph{states} of the environment, 
$\mathcal A$ is the set of \emph{actions} available to the agent, 
$P: \mathcal S \times \mathcal A \times \mathcal S \to [0, 1]$ is a \emph{transition function} that determines the probability of transitioning between states given an action, 
$R: \mathcal S \times \mathcal A \to [-R_\text{max}, R_\text{max}]$ is the expected reward from taking an action in a particular state, bounded by some $R_\text{max} \in \mathbb R$, 
$d_0: \mathcal S \to [0, 1]$ is the \emph{initial state distribution}, and 
$\gamma \in [0, 1]$ is the \emph{discount factor} which decreases the utility of rewards received in the future.
In the \emph{episodic setting}, interactions with the environment are broken into independent \emph{episodes}.
Each episode is further broken into individual \emph{timesteps}.
At each timestep, $t$, the agent observes a state, $S_t$, takes an action, $A_t$, transitions to a new state, $S_{t+1}$, and receives a reward, $R_t$.
Each episode begins with $t=0$ and ends when the agent enters a special state called the \emph{terminal absorbing state}, $s_\infty$.
Once $s_\infty$ is entered, the agent can never leave and receives a reward of 0 forever.
We assume that $\lim_{t \to \infty} \Pr(S_t = s_\infty) = 1$, since
otherwise, the episode may persist indefinitely and the \emph{continuing setting} must be considered.

A \emph{policy}, $\pi: \mathcal S \times \mathcal A \to [0, 1]$, determines the probability that an agent will choose an action in a particular state.
A \emph{parameterized policy}, $\pi^\theta$, is a policy that is defined as a function of some parameter vector, $\theta$, which may be the weights in a neural network, values in a tabular representation, etc.
The \emph{compatible features} of a parameterized policy represent how $\theta$ may be changed in order to make a particular action, $a \in \mathcal A$, more likely in a particular state, $s \in \mathcal S$, and are defined as $\psi(s, a) \coloneqq \frac{\partial}{\partial \theta} \ln \pi^\theta(s, a)$.
The \emph{value function}, $\vf: \mathcal S \to \mathbb R$, represents the expected discounted sum of rewards when starting in a particular state under policy $\pi^\theta$; that is, $\forall t, \vf(s) \coloneqq \mathbb E[\sum_{k = 0}^\infty \gamma^k R_{t + k} | S_t {=} s, \theta]$, where conditioning on $\theta$ indicates that $\forall t, A_t \sim \pi^\theta(S_t, \cdot)$.
The \emph{action-value function}, $\qf: \mathcal S \times \mathcal A \to \mathbb R$, is similar, but also considers the action taken; that is, $\forall t, \qf(s, a) \coloneqq \mathbb E[\sum_{k = 0}^\infty \gamma^k R_{t + k} | S_t {=} s, A_t {=} a, \theta]$.
The \emph{advantage function} is the difference between the action-value function and the (state) value function: $\af(s, a) \coloneqq \qf(s, a) - \vf(s)$.

The \emph{objective} of an RL agent is to maximize some function, $J$, of its policy parameters, $\theta$.
In the episodic setting, the two most commonly stated objectives are the \emph{discounted objective}, $J_\gamma(\theta) = \mathbb E[\sum_{t = 0}^\infty \gamma^t R_t | \theta]$, and the \emph{undiscounted objective}, $J(\theta) {=} \mathbb E[\sum_{t = 0}^\infty R_t | \theta]$.
The discounted objective has some convenient mathematical properties, but it corresponds to few real-world tasks.
\citet{sutton2018reinforcement} have even argued for its deprecation.
However, we will see in Section \ref{sec:lit-review} that the discounted objective is commonly stated as a justification for the use of a discounted factor, even when the algorithms themselves do not optimize this objective.

\section{Problem Statement}

\label{sec:problem-statement}

The formulation of the \emph{policy gradient theorem} \cite{williams1992simple, sutton2000policy, Baxter2000DirectGR} presented by \citet{sutton2000policy} was given for two objectives: the average reward objective for the infinite horizon setting \cite{mahadevan1996average} and the discounted objective, $J_\gamma$, for the episodic setting.
The episodic setting considered in this paper is more popular, as it is better suited to the types of tasks that RL researchers typically use for evaluation (e.g., many classic control tasks, Atari games \cite{bellemare2013arcade}, etc.).
The discounted \emph{policy gradient}, $\nabla J_\gamma(\theta)$, tells us how to modify the policy parameters, $\theta$, in order to increase $J_\gamma$, and is given by:

\begin{equation}
    \label{eq:discounted-pg}
    \nabla J_\gamma(\theta) = \mathbb E \left[\sum_{t=0}^\infty \gamma^t \psi^\theta(S_t, A_t) \qf(S_t, A_t) \bigg| \theta \right].
\end{equation}
Because $\nabla J_\gamma$ is the true gradient of the discounted objective, algorithms that follow unbiased estimates of it are given the standard guarantees of stochastic gradient descent (namely, that given an appropriate step-size schedule and smoothness assumptions, convergence to a locally optimal policy is almost sure  \cite{Bertsekas2000}).
However, most conventional ``policy gradient'' algorithms instead directly or indirectly estimate the expression:

\begin{equation}
    \label{eq:biased_pg}
    \nabla J_?(\theta) = \mathbb E \left[\sum_{t=0}^{\infty} \psi^\theta(S_t, A_t) \qf(S_t, A_t) \bigg| \theta \right].
\end{equation}
%
Note that this expression includes the $\gamma^t$ contained in $\qf$, but differs from the true discounted policy gradient in that it drops the outer $\gamma^t$.
We label this expression $\nabla J_?(\theta)$ because the question of whether or not it is the gradient of some objective function, $J_?$, was left open by \citet{thomas2014bias}.
\citet{thomas2014bias} was only able to construct $J_?$ in an impractically restricted setting where $\pi$ did not affect the state distribution.
The goal of this paper is to provide answers to the following questions:

\begin{itemize}
    \item Is $\jq$ the gradient of some objective function?
    \item If not, does $\nabla J_?(\theta)$ at least converge to a reasonable policy?
\end{itemize}


\section{$\jq$ is Not a Gradient}
\label{sec:not-gradient}

In this section, we answer the first of our two questions and show that the update direction used by almost all policy gradient algorithms, $\jq$, is not the gradient of any function using a proof by contraposition with the Clairaut-Schwarz theorem on mixed partial derivatives \cite{shwartz1873asymmetry}.
First, we present this theorem (Theorem \ref{thm:clairaut-shwarz}) and its contrapositive (Corollary \ref{cor:clairaut-shwarz}).
Next, we present Lemma \ref{lem:biased-dist}, which allows us to rewrite $\nabla J_?(\theta)$ in a new form.
Finally, in Theorem \ref{thm:no-grad} we apply Corollary \ref{cor:clairaut-shwarz} and Lemma \ref{lem:biased-dist} and derive a counterexample proving that $J_?$ does not, in general, exist,
and therefore that the ``policy gradient'' given by $\jq$ is not, in fact, a gradient.

\begin{theorem}
    \label{thm:clairaut-shwarz}
    \textbf{(Clairaut-Schwarz theorem)}: If $f: \mathbb R^n \to \mathbb R$ exists and is continuously twice differentiable in some neighborhood of the point $(a_1, a_2, \dots, a_n)$, then its second derivative is symmetric:
    \begin{equation}
        \forall i, j: \frac{\partial f(a_1, a_2, \dots, a_n)}{\partial x_i \partial x_j} = \frac{\partial f(a_1, a_2, \dots, a_n)}{\partial x_j \partial x_i}.
    \end{equation}
\end{theorem}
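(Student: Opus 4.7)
The plan is to note that this is a classical result in real analysis, so rather than invent something new I would follow the standard proof via a symmetric second difference and the mean value theorem. Since mixed partials with respect to $x_i$ and $x_j$ only involve the dependence of $f$ on those two coordinates, I would first reduce to the two-variable case by fixing all other coordinates $a_k$ for $k \notin \{i, j\}$ and defining $g(x, y) \coloneqq f(a_1, \dots, x, \dots, y, \dots, a_n)$, with $x$ in slot $i$ and $y$ in slot $j$. The hypothesis transfers so that $g$ is continuously twice differentiable in a neighborhood of $(a_i, a_j)$, and it suffices to show $g_{xy}(a_i, a_j) = g_{yx}(a_i, a_j)$.

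For small $h, k$, I would introduce the symmetric second difference
\begin{equation}
S(h, k) \coloneqq g(a_i + h, a_j + k) - g(a_i + h, a_j) - g(a_i, a_j + k) + g(a_i, a_j).
\end{equation}
Grouping the terms first as $\phi(a_i + h) - \phi(a_i)$ where $\phi(x) \coloneqq g(x, a_j + k) - g(x, a_j)$, the mean value theorem yields some $\xi$ between $a_i$ and $a_i + h$ with $S(h,k) = h\,[g_x(\xi, a_j + k) - g_x(\xi, a_j)]$, and a second application of the mean value theorem in the $y$-variable produces some $\eta$ between $a_j$ and $a_j + k$ with $S(h, k) = hk\, g_{xy}(\xi, \eta)$. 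Grouping the same $S(h,k)$ in the opposite order, as $\psi(a_j + k) - \psi(a_j)$ where $\psi(y) \coloneqq g(a_i + h, y) - g(a_i, y)$, symmetrically yields $S(h, k) = hk\, g_{yx}(\xi', \eta')$ for some other intermediate point $(\xi', \eta')$.

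Dividing both representations by $hk$ and letting $(h, k) \to (0, 0)$, the continuity of the two mixed partials at $(a_i, a_j)$ sends the right-hand sides to $g_{xy}(a_i, a_j)$ and $g_{yx}(a_i, a_j)$ respectively, forcing equality. The main obstacle is purely bookkeeping rather than insight: one needs to check that the neighborhood of continuous twice differentiability is large enough to justify both applications of the mean value theorem for all sufficiently small $(h, k)$, and that continuity (not merely existence) of the mixed partials is what licenses the limit. Because the paper's subsequent use of this theorem is via its contrapositive, I would not refine this beyond the textbook argument.
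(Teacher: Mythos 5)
Your proof is correct and is precisely the classical symmetric-second-difference argument (two applications of the mean value theorem, then continuity of the mixed partials to pass to the limit) that appears in the references the paper cites; the paper itself does not reprove the theorem but simply defers to Schwarz and to standard analysis texts such as Rudin. Nothing further is needed.
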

\begin{proof}
    The first complete proof was given by Schwarz \cite{shwartz1873asymmetry}.
    English proofs can be found in many advanced calculus and analysis textbooks \cite[p.~236]{rudin1964principles}.
\end{proof}

\begin{corollary}
    \label{cor:clairaut-shwarz}
    \textbf{(Contrapositive of Clairaut-Shwarz)}: If at some point $(a_1, a_2, \dots, a_n)$  $\in \mathbb R^n$ there exists an $i$ and $j$ such that
    \begin{equation}
        \frac{\partial f(a_1, a_2, \dots, a_n)}{\partial x_i \partial x_j} \neq \frac{\partial f(a_1, a_2, \dots, a_n)}{\partial x_j \partial x_i},
    \end{equation}
    then $f$ does not exist or is not continuously twice differentiable in any neighborhood of $(a_1, a_2, \dots, a_n)$.
\end{corollary}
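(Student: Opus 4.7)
The plan is to obtain the corollary as the immediate contrapositive of Theorem~\ref{thm:clairaut-shwarz}. No new analysis is required; the only substantive step is keeping the quantifiers straight when negating the hypothesis and conclusion of the theorem, so the ``proof'' is essentially a logical exercise.

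Concretely, I would first rewrite Theorem~\ref{thm:clairaut-shwarz} in the schematic form $P \implies Q$, where $P$ denotes ``$f$ exists and is continuously twice differentiable in some neighborhood of $(a_1,\dots,a_n)$'' and $Q$ denotes ``for every pair $(i,j)$, the mixed partials $\partial^2 f/\partial x_i \partial x_j$ and $\partial^2 f/\partial x_j \partial x_i$ at $(a_1,\dots,a_n)$ coincide.'' The standard propositional equivalence $(P \implies Q) \equiv (\neg Q \implies \neg P)$ then hands us the corollary, provided we verify that the corollary's hypothesis really is $\neg Q$ and its conclusion really is $\neg P$.

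Next, I would compute both negations explicitly. Negating $Q$ turns the universal quantifier over index pairs into an existential one: there exist $i,j$ for which the two mixed partials disagree at $(a_1,\dots,a_n)$. This matches the corollary's hypothesis verbatim. Negating $P$ yields a disjunction: $f$ is not defined on any neighborhood of the point, or else on every neighborhood of the point $f$ fails to be continuously twice differentiable. This matches the corollary's conclusion once one reads ``is not continuously twice differentiable in any neighborhood'' as ``in no neighborhood.''

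The main obstacle, to the extent there is one, is bookkeeping around the quantifier ``in some neighborhood.'' Its negation is ``in every neighborhood,'' so the conclusion really asserts that no matter how small a neighborhood one selects, $f$ either fails to be defined on it or fails to be $C^2$ on it. Once this negation is stated precisely, the corollary follows in a single line; there is no genuine mathematical content beyond Theorem~\ref{thm:clairaut-shwarz} itself.
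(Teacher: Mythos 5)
Your proposal is correct and matches the paper's argument exactly: the paper also proves the corollary as the direct contrapositive of Theorem~\ref{thm:clairaut-shwarz}, noting that $(P \implies Q) \equiv (\neg Q \implies \neg P)$ and that negating the universal quantifier over index pairs yields an existential one. Your extra care with negating ``in some neighborhood'' to ``in every neighborhood'' is a slightly more explicit version of the same bookkeeping.
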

\begin{proof}
    Contrapositive of Theorem \ref{thm:clairaut-shwarz}.
    As a reminder, the contrapositive of a statement, $P \implies Q$, is $\neg Q \implies \neg P$.
    The contrapositive is always implied by the original statement.
    Additionally, recall that for any function $g$, $\neg \forall i: g(i) \implies \exists i: \neg g(i)$.
\end{proof}

If we can find an example where $\nabla^2 J_?(\theta)$ is continuous but asymmetric, that is, $\exists i, j: \frac{\partial J_?(\theta)}{\partial \theta_i \partial \theta_j} \neq \frac{\partial J_?(\theta)}{\partial \theta_j \partial \theta_i}$, then we may apply Corollary \ref{cor:clairaut-shwarz} and conclude that $J_?$ does not exist.
To this end, we present a new lemma that allows us to rewrite $\jq$ in a form that is more amenable to computing the second derivatives by hand.
The result of this lemma is of some theoretical interest in itself, but further interpretation is left as future work.
We do not leverage it here for any purpose except to aid in our proof of Theorem \ref{thm:no-grad}.

\begin{lemma}
\label{lem:biased-dist}
Let $\df$ be the unnormalized, weighted state distribution given by:
\begin{equation}
    \label{eq:dist}
    d^\theta_\gamma(s) \coloneqq d_0(s) + (1 - \gamma) \sum_{t = 1}^{\infty} \Pr(S_t = s | \theta).
\end{equation}
Then:
\begin{equation}
    \label{eq:sud}
    \nabla J_?(\theta) = \sum_{s \in \mathcal S} d^\theta_\gamma(s) \frac{\partial}{\partial \theta } \vf(s).
\end{equation}
\end{lemma}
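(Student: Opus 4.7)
The plan is to rewrite $\jq$ by trading its unweighted sum over timesteps for a weighted sum over states governed by $\df$, using the Bellman-style recursion satisfied by $\frac{\partial}{\partial\theta}\vf$ to absorb the missing $\gamma^t$ factors. The two ingredients are the explicit form of $\jq$ as a double sum over time and state, and the identity
\begin{equation}
    \frac{\partial}{\partial\theta}\vf(s) = g(s) + \gamma \sum_{s'} P^\pi(s,s')\, \frac{\partial}{\partial\theta}\vf(s'),
\end{equation}
where $g(s) \coloneqq \sum_a \pi^\theta(s,a)\,\psi^\theta(s,a)\,\qf(s,a)$ and $P^\pi(s,s') \coloneqq \sum_a \pi^\theta(s,a)\, P(s,a,s')$. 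This identity is just the product rule applied to $\vf(s) = \sum_a \pi^\theta(s,a)\,\qf(s,a)$ together with the Bellman equation $\qf(s,a) = R(s,a) + \gamma\sum_{s'} P(s,a,s')\,\vf(s')$.

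First I would push the expectation in $\jq$ through to the joint distribution of $(S_t,A_t)$. Writing $\mu_t(s) \coloneqq \Pr(S_t = s \mid \theta)$, linearity and marginalization over $a$ give $\jq = \sum_{t=0}^\infty \sum_s \mu_t(s)\, g(s)$. Next, solve the Bellman identity above for $g(s) = \frac{\partial}{\partial\theta}\vf(s) - \gamma \sum_{s'} P^\pi(s,s')\, \frac{\partial}{\partial\theta}\vf(s')$ and substitute. Using the standard transition identity $\sum_s \mu_t(s)\, (P^\pi h)(s) = \sum_s \mu_{t+1}(s)\, h(s)$ with $h = \frac{\partial}{\partial\theta}\vf$, and writing $\alpha_t \coloneqq \sum_s \mu_t(s)\, \frac{\partial}{\partial\theta}\vf(s)$, one obtains
\begin{equation}
    \jq = \sum_{t=0}^\infty \bigl[\alpha_t - \gamma\, \alpha_{t+1}\bigr] = \alpha_0 + (1 - \gamma)\sum_{t=1}^\infty \alpha_t,
\end{equation}
after shifting the second sum. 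Since $\mu_0 = d_0$, factoring $\frac{\partial}{\partial\theta}\vf$ out of both terms reassembles $\df$ from \eqref{eq:dist} and yields \eqref{eq:sud}.

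The main technical subtlety I foresee is justifying absolute convergence of the double sums so that the Fubini-style swap of $\sum_t$ and $\sum_s$, and the index-shifted telescoping rearrangement, are both legitimate. For $\gamma < 1$, boundedness of $R$ propagates through $\qf$ and $\frac{\partial}{\partial\theta}\vf$ to produce geometric decay in $t$, which settles the matter. The boundary case $\gamma = 1$ requires the terminal-absorbing-state assumption $\lim_{t\to\infty}\Pr(S_t = s_\infty) = 1$ together with $\frac{\partial}{\partial\theta}\vf(s_\infty) = 0$ to force the tail of each series to vanish; moreover, at $\gamma = 1$ the lemma simply collapses to $\jq = \sum_s d_0(s)\,\frac{\partial}{\partial\theta}\vf(s)$, which is the standard episodic policy gradient theorem, so no new content is needed there.
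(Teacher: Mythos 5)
Your proof is correct, and it takes a genuinely different route from the paper's. The paper posits the ansatz $\df(s)=\sum_{t}w(t)\Pr(S_t=s\mid\theta)$ with unknown weights, expands $\frac{\partial}{\partial\theta}\vf(s)$ via the full infinite-sum identity of Sutton et al.\ (Equation \eqref{eq:lkjahwglkjag}), reindexes the resulting double sum over $(t,k)$ with $i=t+k$, and then solves the condition $\forall i:\sum_{t=0}^{i}w(t)\gamma^{i-t}=1$ by a telescoping series to discover $w(0)=1$ and $w(t)=1-\gamma$ for $t\ge 1$. You instead invert the one-step recursion $\frac{\partial}{\partial\theta}\vf=g+\gamma P^\pi\frac{\partial}{\partial\theta}\vf$ to express $g$ in terms of $\frac{\partial}{\partial\theta}\vf$, push it through the state marginals $\mu_t$, and telescope $\sum_t(\alpha_t-\gamma\alpha_{t+1})$ directly. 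Your version is shorter, avoids both the ansatz and the $(t,k)$ reindexing, and makes the origin of the $1-\gamma$ weight transparent; the paper's version buys a slightly stronger takeaway, namely that $\df$ is derived as \emph{the} weighting of the form $\sum_t w(t)\mu_t$ for which the identity holds, rather than verified after the fact.

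One correction to your convergence remarks: $\gamma<1$ does \emph{not} produce geometric decay of $\alpha_t$ in $t$. The outer sum over $t$ in $\jq$ carries no $\gamma^t$ factor---that omission is the entire subject of the paper---so boundedness of $R$, $\qf$, and $\frac{\partial}{\partial\theta}\vf$ yields only a uniform bound on $\alpha_t$, not decay. The summability $\sum_t|\alpha_t|<\infty$ that licenses your split and shift of the telescoping sum comes, for every $\gamma$, from the episodic structure: since $\frac{\partial}{\partial\theta}\vf(s_\infty)=0$, one has $|\alpha_t|\le C\Pr(S_t\neq s_\infty)$, and the hypothesis $\sum_t\Pr(S_t\neq s_\infty)<\infty$ imposed in Theorem \ref{thm:no-grad} finishes the job. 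The paper's own proof is silent on this point, so raising it is welcome---just attribute the decay to episode termination rather than to the discount factor.
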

\begin{proof}
See appendix.
\end{proof}

In this form, we begin to see the root of the issue: In the above expression, $\df(s)$ is not differentiated, meaning that $\jq$ does not consider the effect updates to $\theta$ have on the state distribution.
We will show that this is in fact the source of the asymmetry in $\nabla^2 J_?(\theta)$.
With this in mind, we present our main theorem.

\begin{theorem}
\label{thm:no-grad}
Let $\mathcal M$ be the set of all MDPs with rewards bounded by $[-R_{\text{max}}, R_{\text{max}}]$ satisfying $\forall \pi: \sum_{t=0}^\infty \Pr(S_t \neq s_\infty) < \infty$.
Then, for all $\gamma < 1$:
\begin{align*}
    \neg \exists J_?: \forall M {\in} \mathcal M: \nabla J_?(\theta) = \mathbb E \left[\sum_{t = 0}^{\infty} \psi^\theta(S_t, A_t) Q^\theta_\gamma(S_t, A_t) \middle | \theta \right].
\end{align*}
\end{theorem}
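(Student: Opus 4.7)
The plan is to prove the theorem by contradiction via the contrapositive of Clairaut--Schwarz. Assume that a function $J_?$ satisfying the stated equation exists on every MDP in $\mathcal M$. Using Lemma \ref{lem:biased-dist} we write
\begin{equation}
\jq \;=\; \sum_{s \in \mathcal S} \df(s)\, \frac{\partial}{\partial \theta} \vf(s),
\end{equation}
and differentiate once more with respect to $\theta_j$ via the product rule:
\begin{equation}
\frac{\partial^2 J_?(\theta)}{\partial \theta_j\, \partial \theta_i} \;=\; \sum_{s \in \mathcal S}\!\left[\frac{\partial \df(s)}{\partial \theta_j}\,\frac{\partial \vf(s)}{\partial \theta_i} \;+\; \df(s)\,\frac{\partial^2 \vf(s)}{\partial \theta_j\, \partial \theta_i}\right].
\end{equation}
For a finite MDP under a smooth policy class (e.g., softmax), $\vf$ and $\df$ are rational functions of $\theta$ and hence $C^\infty$, so the second summand is symmetric in $(i,j)$ by Theorem \ref{thm:clairaut-shwarz} applied to $\vf$. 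Therefore any asymmetry of $\hq$ must be witnessed by the cross term $\sum_s \frac{\partial \df(s)}{\partial \theta_j}\,\frac{\partial \vf(s)}{\partial \theta_i}$. The strategy is to exhibit a single $M \in \mathcal M$, parameter vector $\theta$, and indices $i \neq j$ at which this cross term is not symmetric under swapping $i \leftrightarrow j$; by Corollary \ref{cor:clairaut-shwarz} this contradicts the existence of $J_?$.

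To build the counterexample I would use the minimal structure that decouples ``state-distribution'' effects from ``value'' effects. A natural choice is an episodic MDP with a root state $s_0$ offering two actions that transition deterministically to distinct successor states $s_1, s_2$, each of which in turn offers actions with carefully chosen rewards and then transitions to $s_\infty$. Parameterize with independent softmax parameters at each non-terminal state, and pick two coordinates of $\theta$ so that $\theta_i$ controls the action distribution at a downstream state (so $\partial \vf(s)/\partial \theta_i$ is supported on that branch only) while $\theta_j$ controls the distribution at $s_0$ (so $\partial \df(s)/\partial \theta_j$ shifts mass \emph{between} branches). The swap $i \leftrightarrow j$ then pairs fundamentally different factors in the cross sum, and for generic rewards the two evaluations disagree. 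The MDP terminates in a bounded number of steps, so $\forall \pi\!: \sum_{t \geq 0} \Pr(S_t \neq s_\infty) < \infty$ and $M \in \mathcal M$; because rewards can be rescaled into $[-R_{\max}, R_{\max}]$ for any fixed $\gamma < 1$ without affecting the asymmetry (it is linear in the reward vector), the construction works for every $\gamma < 1$ as required by the quantifier in the statement.

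The main obstacle is the arithmetic bookkeeping: one must choose rewards and a base point $\theta$ so that the two sides of the cross term are genuinely unequal rather than canceling by accidental symmetry, while keeping the computation short enough to verify by hand. The conceptual key, highlighted in the discussion following Lemma \ref{lem:biased-dist}, is that $\jq$ ignores the dependence of $\df$ on $\theta$; hence any MDP in which one parameter shifts visitation probabilities and another controls values at downstream states should produce asymmetric mixed partials, and verifying continuity of $\hq$ in a neighborhood of the chosen $\theta$ (immediate from the smoothness of $\vf$ and $\df$ for finite MDPs with softmax policies) lets us legitimately invoke Corollary \ref{cor:clairaut-shwarz} to conclude that no such $J_?$ exists.
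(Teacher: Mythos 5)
Your proposal follows essentially the same route as the paper: rewrite $\jq$ via Lemma \ref{lem:biased-dist}, split the mixed partial by the product rule into a symmetric term and a cross term involving $\partial \df(s)/\partial\theta$, exhibit an MDP in which one parameter shifts the state distribution while another only shifts downstream values, check continuity, and invoke Corollary \ref{cor:clairaut-shwarz}. The paper's counterexample (Figure \ref{fig:asymmetric_hessian}) is just a minimal two-state chain realizing exactly the decoupling you describe, and the explicit computation you defer as bookkeeping yields mixed partials $\gamma\,\sigma'(\theta_1)\sigma'(\theta_2)$ versus $\sigma'(\theta_1)\sigma'(\theta_2)$, which are unequal for all $\gamma<1$.
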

\begin{proof}
Theorem \ref{thm:clairaut-shwarz} states that if $J_?$ is a twice continuously differentiable function, then its second derivative is symmetric.
%
%
It is easy to show that this is not true informally using Lemma \ref{lem:biased-dist}:

\begin{align*}
    \label{eq:asym}
    \frac{\partial^2 J_?(\theta)}{\partial \theta_i \partial \theta_j}
    &= \frac{\partial}{\partial \theta_i} \left( \sum_{s \in \mathcal S} d^\theta_\gamma(s) \frac{\partial}{\partial \theta_j } \vf(s) \right)
    \\&= \underbrace{\sum_{s \in \mathcal S} d^\theta_\gamma(s) \frac{\partial^2}{\partial \theta_i \partial \theta_j } \vf(s)}_{\text{symmetric}} + \underbrace{\sum_{s \in \mathcal S} \frac{\partial}{\partial \theta_i} d^\theta_\gamma(s) \frac{\partial}{ \partial \theta_j } \vf(s)}_{\text{asymmetric}}.
\end{align*}
Therefore, we have by Corollary \ref{cor:clairaut-shwarz} that so long as $\nabla^2 J_?(\theta)$ is continuous, $J_?$ does not exist.
In order to rigorously complete the proof, we must provide as a counterexample an MDP for which the above asymmetry is present.
We provide such a counterexample in Figure \ref{fig:asymmetric_hessian}.
While we defer the full proof to the appendix, we describe the intuition behind the counterexample below.

Assume that for the example given in Figure 1, $J_?$ exists and $\jq$ is its gradient.
Consider in particular the case where $\gamma = 0$.
In this case, $\nabla^2 J_?(\theta)$ is asymmetric because $\theta_1$ affects the value function \emph{and} the state distribution, whereas $\theta_2$ affects the value function but not the state distribution.
Therefore, the second term in $\partial^2 J_?(\theta)/\partial \theta_i \partial \theta_j$ (labeled ``asymmetric'' in the above equation) is non-zero when $i = 1$ and $j = 2$, and zero when $i = 2$ and $j = 1$.
In the appendix, we show that the above expression is symmetric if and only if $\gamma = 1$.
Therefore, for $\gamma < 1$, Corollary \ref{cor:clairaut-shwarz} applies and we may conclude that $J_?$ either does not exist \emph{or} is not continuously twice differentiable.
In this example, the $\nabla^2 J_?(\theta)$ is continuous everywhere.
Therefore, we conclude that $J_?$ does not exist.
This completes the counterexample.

\end{proof}

\begin{figure}[t]
    \centering
    \includegraphics[height=60pt]{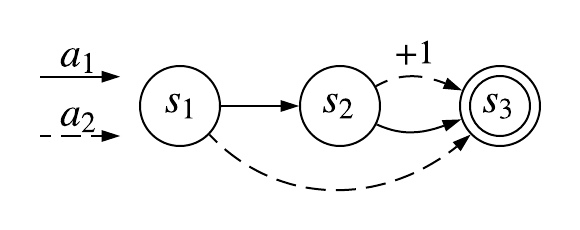}
    \caption{
        A counterexample wherein the derivative of $\jq$ is asymmetric for all $\gamma < 1$, necessary for the proof of Theorem \ref{thm:no-grad}.
        The agent begins in $s_1$, and may choose between actions $a_1$ and $a_2$, each of which produces deterministic transitions.
        The rewards are 0 everywhere except when the agent chooses $a_2$ in state $s_2$, which produces a reward of $+1$.
        The policy is assumed to be tabular over states and actions, with one parameter, $\theta_1$, determining the policy in $s_1$, and a second parameter, $\theta_2$, determining the policy in $s_2$ (e.g., the policy may be determined by the standard logistic function, $\sigma(x) = \frac{1}{1 + e^{-x}}$, such that $\pi(s_1, a_1) = \sigma(\theta_1)$).
    }
    \label{fig:asymmetric_hessian}
\end{figure}

\section{The Fixed Point of $\jq$ is Sometimes Pessimal}

\label{sec:pessimal}

Having established that $\jq$ is not the gradient of any function for choices of $\gamma < 1$, we move on to the question of whether or not $\jq$ converges to some reasonable policy in the general case.
For instance, consider the case of \emph{temporal difference} (TD) methods.
While the expected update of TD is not a gradient update \cite{sutton2015introduction}, in the on-policy setting with a linear function approximator, TD has been shown to converge to a unique point close to the global optimum of the mean-squared projected Bellman error (MSPBE) \cite{tsitsiklis1997analysis, sutton2009fast, sutton2018reinforcement}, which is called the ``TD fixed point.''
Through the geometric interpretation of the MSPBE, it may be said that the TD fixed point is ``reasonable'' in that it is close to the best possible estimate of the mean squared Bellman error (MSBE) under a particular linear parameterization of the value function.

We ask the question of whether or not a similar reasonable fixed point exists in the case of the update given by $\jq$.
While it is not clear in what sense the fixed point should be ``reasonable,'' we propose a very minimal criterion: any reasonable policy fixed point should at least surpass the worst possible (pessimal) policy under \emph{either} the discounted or undiscounted objective.
That is, if the fixed point is pessimal under both objectives, this suggests that it will be difficult to come up with a satisfactory justification.
Surprisingly, it can be shown that $\jq$ fails to pass even this low bar.

To demonstrate this, we contrast two examples, given in Figures \ref{fig:short-vs-long} and \ref{fig:degenerate}.
In the former example, $\jq$ behaves in a way that is (perhaps) expected: it converges to the optimal policy under the discounted objective, while failing to optimize the undiscounted objective.
This is a well understood trade-off of discounting  that can be explained as ``short-sightedness'' by the agent.
The latter example, however, shows a case where an agent following $\jq$ behaves in a manner that is apparently irrational: it achieves the smallest possible discounted return \emph{and} undiscounted return, despite the fact that it is possible to maximize either within the given policy parameterization.
We therefore suggest that for at least some MDPs, $\jq$ may not be a reasonable choice.

\begin{figure}
\begin{subfigure}
    \centering
    \includegraphics[width=0.8\columnwidth]{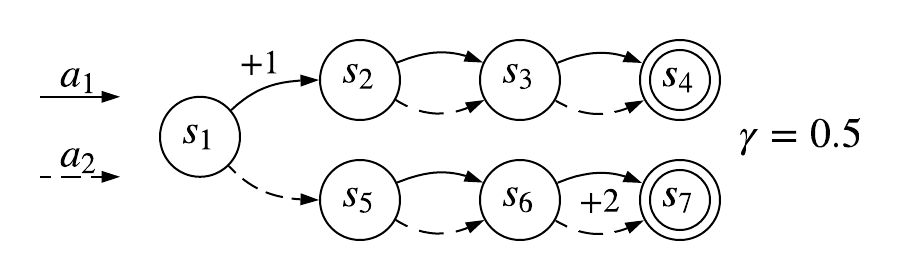}
    \caption{
        An example where the fixed point of $\jq$ is optimal with respect to the discounted objective but pessimal with respect to the undiscounted objective.
        The agent starts in state $s_1$, and can achieve a reward of $+1$ by transitioning from $s_1$ to $s_2$, and a reward of $+2$ by transitioning from $s_6$ to $s_7$, regardless of the action taken.
        In order to maximize the undiscounted return, the agent should choose $a_2$.
        However, the discounted return is higher from choosing $a_1$.
        Only the return in $s_1$ will affect the policy gradient as the advantage is zero in every other state.
        Thus, algorithms following $\jq$ methods will eventually choose $a_1$.
        If the researcher is concerned with the \emph{undiscounted} objective, as is often the case, this result is problematic.
        Choosing a larger value of $\gamma$ trivially fixes the problem in this particular example, but for any value of $\gamma < 1$, similar problems will arise given a sufficiently long horizon, and thus the problem is not truly eliminated.
        Nevertheless, this is well-understood as the trade-off of discounting.
    }
    \label{fig:short-vs-long}
\end{subfigure}

\begin{subfigure}
    \centering
    \includegraphics[width=\columnwidth]{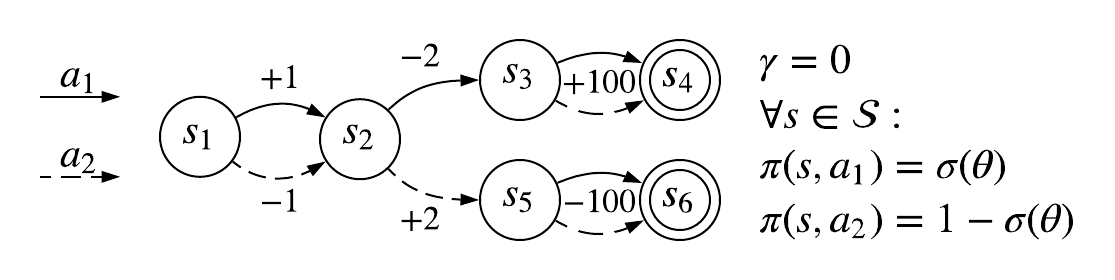}
    \caption{
        An example where the fixed point of $\jq$ is pessimal with respect to both the discounted and undiscounted objectives.
        In this formulation, there is a single policy parameter, $\theta$, so the agent must execute the same policy in every state.
        It is difficult to justify any solution other than to always choose $a_1$.
        If the agent is completely myopic, then $a_1$ gives the superior immediate reward of $+1$ in the starting state.
        If the agent is somewhat farther-sighted, then always choosing $a_1$ will eventually result in  the $+100$ reward.
        Choosing $a_2$ provides no benefit with respect to either view.
        Following $\jq$, however, will result in a policy that \emph{always} chooses $a_2$.
        This results from the fact that the advantage of $a_2$ in state $s_2$ is greater than the advantage of $a_1$ in state $s_1$, while the advantages in states $s_3$ and $s_5$ are zero.
        Because $\jq$ ignores the change in the state distribution, the net update always increases the probability of choosing $a_2$.
        Again, while we chose $\gamma = 0$ for simplicity, similar examples can be produced in long-horizon problems with ``reasonable'' settings of $\gamma$, such as $0.99$ or higher.
        One may ask if the sharing of a policy between states is contrived, but such a situation occurs reliably under partial observability or when a function approximator is used.
    }
    \label{fig:degenerate}
\end{subfigure}
\end{figure}

\section{Literature Review}
\label{sec:lit-review}

We previously claimed that $\jq$ is the direction followed by state-of-the-art policy gradient methods and that the lack of theoretical clarity on the behavior of algorithms following $\jq$ has resulted in a multiplicity of errors in the literature.
In this section, we substantiate this point by surveying a subset of popular policy gradient algorithms and their associated papers.
We show that the majority of them include erroneous or misleading statements directly relating to arguments made in this paper.
We emphasize that we do not pose this as a criticism of the authors, but rather a symptom of the ambiguity in the literature that we hope to address with this paper.

\subsection{Methodology} 

Rather than manually selecting which papers to review, which may have introduced an unacceptable degree of bias, we chose to review the papers associated with every policy gradient algorithm included in  \texttt{stable-baselines} \cite{stable-baselines}, a fork of the popular OpenAI \texttt{baselines} \cite{baselines} library.
We chose this particular subset of algorithms because inclusion in the library generally indicates that the algorithms have achieved a certain level of popularity and relevance.
The papers corresponding to these algorithms have received hundreds or thousands of citations each, 
and, with the exception of PPO \cite{schulman2017proximal}, were published at top machine learning conferences.
It therefore seems reasonable to claim that the papers were impactful in the field and are representative of high-quality research.
While we acknowledge that this sampling of algorithms is heavily biased towards the subfield of ``deep'' RL, we argue that this is not unreasonable given the immense popularity of this area and its impact on the broader field.

For each algorithm, we examined the psuedocode for the algorithm itself, the background and theoretical sections of the associated paper, and several publicly available implementations, including the implementation created by the authors where available.
We tried to answer the following three questions for each algorithm:

\begin{enumerate}
    \item Does the algorithm use $\jq$ rather than an unbiased estimator? 
    \item If so, did the authors note that $\jq$ is not an unbiased estimator of the policy gradient?
    \item Did the paper include any erroneous or misleading claims about $\jq$?
\end{enumerate}
For questions (2) and (3), we support our evaluation of the papers with quotations from the text in cases where there were errors or ambiguities.
While this approach is verbose, we felt that paraphrasing the original papers would not allow the readers understand the errors in their appropriate context.
For 5 out of 8 of the algorithms,\footnote{ACKTR, PPO, TD3, TRPO, and SAC} the original authors or organizations provided code, allowing us to directly answer (1).
For all eight papers, we examined the \texttt{stable-baselines} \cite{stable-baselines} implementation as well as several other implementations including \texttt{tf-agents} \cite{TFAgents}, \texttt{garage} \cite{garage}, \texttt{spinning-up} \cite{SpinningUp2018}, and the \texttt{autonomous-learning-library} \cite{nota2020autonomous}.
Finally, for each paper we note the conference, year, and citation count estimated by Google Scholar on February 23, 2020.

\subsection{Results}

Eight policy gradient algorithms are included in \texttt{stable-baselines}.
Our high-level results to each question are as follows:

\begin{enumerate}
    \item All eight of the algorithms use $\jq$ instead of an unbiased estimator, both in their psuedocode and implementations.
    \item Only \emph{one} out of the eight papers calls attention to the fact that $\jq$ is a biased estimator.
    \item Three out of the eight papers included claims that are clearly  erroneous. Two additional papers made misleading claims in that they presented the discounted policy gradient ($\nabla J_\gamma(\theta)$), and then proposed an algorithm that uses $\jq$, without making note of this discrepancy.
    A sixth paper included claims that we argue are misleading, but not strictly false.
\end{enumerate}
We reproduce the three explicitly erroneous claims below in order to give the reader a sense of common misunderstandings.
Notice that the quotations sometimes use slightly different notation than this paper.
We try to supply the appropriate context such that the meaning intended by the authors can be understood.
Additional analysis can be found in the appendix.

\subsection*{A3C (ICML 2016, 2859 citations)}

Asynchronous Advantage Actor-Critic (A3C) \cite{mnih2016asynchronous} is an actor-critic method that generates sample batches by running multiple actors in parallel.
The original version of the algorithm achieved state-of-the-art performance on many Atari games when it was published.
The background section of the main paper includes the text:

\begin{quote}
    The return $R_t = \sum_{k=0}^\infty \gamma^k r_{t+k}$ is the total accumulated return from time step $t$ with discount factor $\gamma \in (0, 1]$. The goal of the agent is to maximize the expected return from each state $s_t$.
    
    [\dots]
    
    In contrast to value-based methods, policy based methods directly parameterize the policy $\pi(a | s; \theta)$ and update the parameters $\theta$ by performing, typically approximate, gradient ascent on $\mathbb E[R_t]$. One example of such a method is the REINFORCE family of algorithms due to \citet{williams1992simple}. Standard REINFORCE updates the policy parameters $\theta$ in the direction $\nabla_\theta \log \pi(a_t|s_t; \theta)R_t$, which is an unbiased estimate of $\nabla_\theta \mathbb E[R_t]$.
    It is possible to reduce the variance of this estimate while keeping unbiased by subtracting a learned function of the state $b_t(s_t)$, known as a baseline \cite{williams1992simple}, from the return. The resulting gradient is $\nabla_\theta \log(a_t|s_t;\theta)(R_t - b_t(s_t))$.
\end{quote}
It is falsely claimed that $\nabla_\theta \pi(a_t | s_t;\theta)R_t$ (which is an unbiased sample estimate of $\jq$) is an unbiased estimate of $\nabla_\theta \mathbb E[R_t]$, where $R_t$ is the discounted return at timestep $t$ rather than the individual reward.
We showed that this is not the case.

\subsection*{ACKTR (ICLR 2017, 235 citations)}

Actor Critic using Kronecker-Factored Trust Region (ACKTR) \cite{wu2017scalable} is a variation of A3C that attempts to efficiently estimate the \emph{natural} policy gradient, which uses an alternate notion of the direction of steepest ascent.
The resulting algorithm is considerably more sample efficient than A3C.
The background section contains:

\begin{quote}
    The goal of the agent is to maximize the expected $\gamma$-discounted cumulative return $J(\theta) = \mathbb E_\pi[R_t] = \mathbb E_\pi[\sum_{i\geq0}^\infty \gamma^i r(s_{t+i}, a_{t+i})]$ with respect to the policy parameters $\theta$.
    Policy gradient methods directly parameterize a policy $\pi_\theta(a|s_t)$ and update parameter $\theta$ so as to maximize the objective $J(\theta)$. In its general form \cite{schulman2015high}, the policy gradient is defined as:
    
    \begin{equation}
        \nabla_\theta J(\theta) = \mathbb E_\pi \left[\sum_{t=0}^\infty \Psi^t \nabla_\theta \log \pi_\theta(a_t|s_t)\right],
    \end{equation}
    where $\Psi^t$ is often chosen to be the advantage function $A^\pi(s_t, a_t)$, which provides a relative measure of value of each action $a_t$ at a given state $s_t$.
\end{quote}
The authors assert that the gradient, $\nabla_\theta J(\theta)$, is equal to the given expression, which is false for the proposed choice of $A^\pi(s_t,a_t)$.
They did not note that above expression is an approximation or otherwise clarify.
Therefore, the definition of the policy gradient presented above is erroneous.

\subsection*{SAC (ICML 2018, 446 citations)}

Soft Actor-Critic (SAC) \cite{haarnoja2018sac} is an algorithm in the deterministic policy gradient family \cite{silver2014deterministic}.
The deterministic policy gradient gives an expression for updating the parameters of a deterministic policy and is derived from the conventional policy gradient theorem, meaning that the arguments presented in this paper apply.
SAC is considered state-of-the-art on continuous control tasks.
The appendix states:

\begin{quote}
    The exact definition of the discounted maximum entropy objective is complicated by the fact that, when using a discount factor for policy gradient methods, we typically do not discount the state distribution, only the rewards. In that sense, discounted policy gradients typically do not optimize the true discounted objective. Instead, they optimize average reward, with the discount serving to reduce variance, as discussed by \citet{thomas2014bias}.
\end{quote}
The relationship between the average reward objective and $\jq$ was discussed by \citet{kakade2001optimizing} and elaborated on by \citet{thomas2014bias}.
However, the claim that $\jq$ optimizes the average reward objective is erroneous by Theorem \ref{thm:no-grad}.

\subsection{Discussion}

The three erroneous claims all involved misinterpreting $\jq$ as the gradient of some function.
Two of the remaining papers failed to acknowledge the difference between the gradient of the discounted objective, $\nabla J_\gamma(\theta)$, and the gradient direction followed by the presented algorithm, typically an estimate of $\jq$.
Even among the papers where we did not find explicit errors, errors were avoided largely through the use of hedged language and ambiguity, rather than technical precision.
For examples of this, we encourage the reader to refer to the appendix.
While for the purposes of this review we only sampled a small subset of the literature on policy gradients, we found the results sufficient to support our claim that there exists a widespread misunderstanding regarding $\jq$.

\section{Conclusions}

We conclude by emphasizing the while $\jq$ is not a gradient (Section \ref{sec:not-gradient}), can in some cases result in pessimal behavior (Section \ref{sec:pessimal}), and is commonly misrepresented in the literature (Section \ref{sec:lit-review}), it has remained the most popular estimator of the policy gradient due to its effectiveness when applied to practical problems.
The precise reason for this effectiveness, especially in the episodic setting, remains an open question.

\newpage
\appendix
\section*{Appendix}

\subsection*{Proof of Lemma 4.1}

We begin by hypothesizing that $\jq$ takes the form of a weighted distribution over $\frac{\partial}{\partial \theta } \vf(s)$, given some time-dependent weights, $w(t)$, on each term in the state distribution.
That is, we hypothesize that equality:
\begin{equation}
     \nabla J_?(\theta) = \sum_{s \in \mathcal S} d^\theta_\gamma(s) \frac{\partial}{\partial \theta } \vf(s),
\end{equation}
holds for some $\df$:
\begin{align*}
    \df(s) = \sum_{t = 0}^{\infty} w(t) \Pr(S_t = s | \theta).
\end{align*}
We then must prove that this holds for some choice of $w(t)$, and then derive the satisfying choice of $w(t)$.
\citet{sutton2000policy} established that:
\begin{equation}
\label{eq:lkjahwglkjag}
   \frac{\partial}{\partial \theta} \vf(s){=}\sum_{k=0}^{\infty} \sum_{x \in \mathcal S}\gamma^k \Pr(S_{t+k}{=}x|S_t{=}s,\theta)\sum_{a \in \mathcal A} \qf(x,a) \frac{\partial \pi^\theta(x,a)}{\partial \theta}.
\end{equation}
Substituting this into our expression for $\jq$ gives us:
\begin{align*}
    & \sum_{s \in \mathcal S} d^\theta_\gamma(s) \frac{\partial}{\partial \theta } \vf(s) \\
    =& \sum_{s \in \mathcal S} \sum_{t = 0}^{\infty} w(t) \Pr(S_t = s | \theta) \\
    &\times \sum_{k=0}^{\infty}\sum_{x \in \mathcal S} \gamma^k \Pr(S_{t + k} = x | S_t = s, \theta) \sum_{a \in \mathcal A} \qf(x,a) \frac{\partial \pi^\theta(x,a)}{\partial \theta} \\
    =& \sum_{s \in \mathcal S} \sum_{t = 0}^{\infty} \sum_{k=0}^{\infty}\sum_{x \in \mathcal S} \sum_{a \in \mathcal A}  w(t) \gamma^k \\
    &\times \Pr(S_t = s | \theta) 
     \Pr(S_{t + k} = x | S_t = s, \theta) \qf(x,a) \frac{\partial \pi^\theta(x,a)}{\partial \theta}\\
    =& \sum_{s \in \mathcal S} \sum_{t = 0}^{\infty} \sum_{k=0}^{\infty}\sum_{x \in \mathcal S} \sum_{a \in \mathcal A}  w(t) \gamma^k \\
    &\times \Pr(S_t = s | \theta) \Pr(S_{t + k} = x | S_t = s, \theta) \qf(x, a) \pi^\theta(x,a)\frac{\partial \ln (\pi^\theta(x,a))}{\partial \theta} \\
    =& \sum_{s \in \mathcal S} \sum_{t = 0}^{\infty} \sum_{k=0}^{\infty}\sum_{x \in \mathcal S} \sum_{a \in \mathcal A}  w(t) \gamma^k \\
    &\times \Pr(S_t = s | \theta) \Pr(S_{t + k} = x | S_t = s, \theta) \qf(x, a) \pi^\theta(x,a) \psi(x, a) \\
    =& \sum_{s \in \mathcal S} \sum_{t = 0}^{\infty} \sum_{k=0}^{\infty}\sum_{x \in \mathcal S} \sum_{a \in \mathcal A}  w(t) \gamma^k \\
    &\times \Pr(S_t = s | \theta)
     \Pr(S_{t + k} = x | S_t = s, \theta) \Pr(A_{t+k} = a | S_{t + k} = x, \theta) \\
    &\times \qf(x, a) \psi(x, a) \\
    =& \sum_{t = 0}^{\infty} \sum_{k=0}^{\infty}\sum_{x \in \mathcal S} \sum_{a \in \mathcal A}  w(t) \gamma^k \\
    &\times \Pr(S_{t + k} = x | \theta) \Pr(A_{t+k} = a | S_{t + k} = x, \theta) \qf(x, a) \psi(x, a), \\
\end{align*}
since $\Pr(A_{t+k}=a |S_{t+k}=x,\theta)=\Pr(A_{t+k}=a |S_{t+k}=x,S_t=s,\theta)$ and by the law of total probability. 
The key point is that we have removed the term $\Pr(S_t = s | \theta)$.
Continuing, starting with the fact that $\Pr(S_{t+k}=x |  \theta) \Pr(A_{t+k}=a |S_{t+k}=x,\theta)= \Pr(S_{t+k}=x,A_{t+k}=a |\theta)$, we have that:

\begin{align*}
    & \sum_{s \in \mathcal S} d^\theta_\gamma(s) \frac{\partial}{\partial \theta } \vf(s) \\
    =& \sum_{t = 0}^{\infty} \sum_{k=0}^{\infty}\sum_{x \in \mathcal S} \sum_{a \in \mathcal A}  w(t) \gamma^k
     \Pr(S_{t + k} = x, A_{t+k} = a | \theta) \qf(x, a) \psi(x, a) \\
    =& \sum_{t = 0}^{\infty} \sum_{k=0}^{\infty} \mathbb E\left[ w(t) \gamma^k \qf(S_{t+k}, A_{t + k}) \psi(S_{t+k}, A_{t + k}) \middle | \theta \right]\\
    =& \sum_{t = 0}^{\infty} \sum_{i=t}^{\infty} \mathbb E\left[ w(t) \gamma^{i - t} \qf(S_i, A_i) \psi(S_i, A_i) \middle | \theta \right],\\
\end{align*}
by substitution of the variable $i = t + k$.
Continuing, we can move the summation inside the expectation and reorder the summation:

\begin{align*}
    \sum_{s \in \mathcal S} d^\theta_\gamma(s) \frac{\partial}{\partial \theta } \vf(s)
    &= \mathbb E \left[\sum_{t = 0}^{\infty} \sum_{i = t}^{\infty} w(t) \gamma^{i - t} \qf(S_i, A_i) \psi(S_i, A_i) \bigg| \theta \right] \\
    &= \mathbb E \left[\sum_{i = 0}^{\infty} \sum_{t = 0}^i w(t) \gamma^{i - t} \qf(S_i, A_i) \psi(S_i, A_i) \bigg| \theta \right] \\
    &= \mathbb E \left[\sum_{i = 0}^{\infty} \qf(S_i, A_i) \psi(S_i, A_i) \sum_{t = 0}^i w(t) \gamma^{i - t} \bigg| \theta \right].
\end{align*}
In order to derive $\jq$, we simply need to choose a $w(t)$ such that $\forall i: \sum_{t = 0}^i w(t) \gamma^{i - t} = 1$.
This is satisfied by the choice: $w(t) = 1$ if $t = 0$, and  $1 - \gamma$ otherwise.
This trivially holds for $i=0$, as $w(0) \gamma^0 = (1) (1) = 1$.
For $i > 0$:

\begin{align*}
    \sum_{t = 0}^i w(t) \gamma^{i - t} &= w(0) \gamma^i + \sum_{t = 1}^i w(t) \gamma^{i - t} \\
        &= \gamma^i + \sum_{t = 1}^i (1 - \gamma) \gamma^{i - t} \\
        &= \gamma^i + \underbrace{\sum_{t = 1}^i (\gamma^{i - t} - \gamma^{i - t + 1})}_\text{telescoping series} \\
        &= \gamma^i + \gamma^{i - i} - \gamma^{i - 1 + 1} \\
        &= \gamma^i + 1 - \gamma^i \\
        &= 1.
\end{align*}

Thus, for this choice of $w(t)$:

\begin{align*}
    \sum_{s \in \mathcal S} d^\theta_\gamma(s) \frac{\partial}{\partial \theta } \vf(s) &= \mathbb E \left[\sum_{i = 0}^{\infty} \qf(S_i, A_i) \psi(S_i, A_i) \bigg| \theta \right] \\
    &= \jq.
\end{align*}

Finally, we see that this choice of $w(t)$ also gives us the expression for $\df$ stated in Lemma \ref{lem:biased-dist}:

\begin{align*}
    \df(s) &= \sum_{t = 0}^{\infty} w(t) \Pr(S_t = s | \theta) \\
    &= \underbrace{w(0)}_{1} \underbrace{\Pr(S_0 = s | \theta)}_{d_0(s)} + \sum_{t = 1}^{\infty} \underbrace{w(t)}_{1 - \gamma} \Pr(S_t = s | \theta) \\
    &= d_0(s) + (1 - \gamma) \sum_{t = 1}^{\infty} \Pr(S_t = s|\theta).
\end{align*}

\subsection*{Continuation of Proof of Theorem 4.2}

We continue from the example given in Figure \ref{fig:asymmetric_hessian}.
First, we compute $\df$ in terms of $\theta$ for each state using the definition of the MDP and $\pi$:

\begin{align*}
    \df(s_1)&= 1 \\
    \df(s_2)&= (1 - \gamma) \Pr(S_1 = s_2) \\
            &= (1 - \gamma) \pi(s_1, a_1) \\
            &= (1 - \gamma) \sigma(\theta_1) \\
\end{align*}

Next, we compute $\vf$ in each state in terms of $\theta$.
Note that $\qf(s_1, a_1) = \gamma \vf(s_2)$ because taking $a_1$ in $s_1$ leads to $s_2$ and has zero reward.

\begin{align*}
    \vf(s_2) &= \pi(s_2, a_1) \qf(s_2, a_1) + \pi(s_2, a_2) \qf(s_2, a2) \\
             &= \pi(s_2, a_1) (1) + \pi(s_2, a_2) (0) \\
             &= \sigma(\theta_2) \\
    \vf(s_1) &= \pi(s_1, a_1) \qf(s_1, a_1) + \pi(s_1, a_1) \qf(s_1, a_1) \\
             &= \pi(s_1, a_1) (\gamma \vf(s_2)) + \pi(s_1, a_1) (0) \\
             &= \gamma \sigma(\theta_1) \sigma(\theta_2)
\end{align*}

Recall that by the definition of our policy and substitution, we have:

\begin{align*}
    \frac{\partial \pi^\theta(s_1,a_1)}{\partial \theta_1} &= \frac{\partial \sigma(\theta_1)}{\partial \theta_1} \\
    \frac{\partial \pi^\theta(s_2,a_1)}{\partial \theta_2} &= \frac{\partial \sigma(\theta_2)}{\partial \theta_2} 
\end{align*}

Next, we compute each partial derivative, $\partial \vf(s) / \partial \theta_i$:

\begin{align*}
    \frac{\partial \vf(s_1)}{\partial \theta_1} &= \frac{\partial}{\partial \theta_1} \gamma \sigma(\theta_1) \sigma(\theta_2) \\
    &= \gamma \sigma(\theta_2) \frac{\partial \sigma(\theta_1)}{\partial \theta_1} \\
    \frac{\partial \vf(s_1)}{\partial \theta_2} &= \frac{\partial}{\partial \theta_2} \gamma \sigma(\theta_1) \sigma(\theta_2) \\
    &= \gamma \sigma(\theta_1) \frac{\partial \sigma(\theta_2)}{\partial \theta_2} \\
    \frac{\partial \vf(s_2)}{\partial \theta_1} &= \frac{\partial \sigma(\theta_2)}{\partial \theta_1} \\
    &= 0 \\
    \frac{\partial \vf(s_2)}{\partial \theta_2} &= \frac{\partial\sigma(\theta_2)}{\partial \theta_2} 
\end{align*}

With the necessary components in place, we can apply Lemma \ref{lem:biased-dist} to compute each partial derivative of $J_?$:

\begin{align*}
    \frac{\partial J_?(\theta)}{\partial \theta_1} &= \underbrace{\df(s_1)}_1 \frac{\partial \vf(s_1)}{\partial \theta_1} + \df(s_2) \underbrace{\frac{\partial \vf(s_2)}{\partial \theta_1}}_0 \\
    &= \gamma \sigma(\theta_2) \frac{\partial \sigma(\theta_1)}{\partial \theta_1} \\
    \frac{\partial J_?(\theta)}{\partial \theta_2} &= \df(s_1) \frac{\partial \vf(s_1)}{\partial \theta_2} + \df(s_2) \frac{\partial \vf(s_2)}{\partial \theta_2} \\
    &= \gamma \sigma(\theta_1) \frac{\partial \sigma(\theta_2)}{\partial \theta_2} + (1 - \gamma) \sigma(\theta_1) \frac{\partial\sigma(\theta_2)}{\partial \theta_2} \\ &= \sigma(\theta_1) \frac{\partial\sigma(\theta_2)}{\partial \theta_2}
\end{align*}

Finally, we compute the second order partial derivatives:

\begin{align*}
    \frac{\partial}{\partial \theta_2} \left( \frac{\partial J_?(\theta)}{\partial \theta_1} \right) &= \frac{\partial}{\partial \theta_2} \left( \gamma \sigma(\theta_2) \frac{\partial \sigma(\theta_1)}{\partial \theta_1} \right) \\
    &= \gamma \frac{\partial \sigma(\theta_1)}{\partial \theta_1} \frac{\partial \sigma(\theta_2)}{\partial \theta_2} \\
    \frac{\partial}{\partial \theta_1} \left( \frac{\partial J_?(\theta)}{\partial \theta_2} \right) &= \frac{\partial}{\partial \theta_1} \left( \sigma(\theta_1) \frac{\partial \sigma(\theta_2)}{\partial \theta_2} \right) \\
    &= \frac{\partial \sigma(\theta_1)}{\partial \theta_1} \frac{\partial \sigma(\theta_2)}{\partial \theta_2}
\end{align*}

Thus we see that the following holds for any $\theta_1$ and $\theta_2$:

\begin{equation}
    \forall \gamma < 1: \frac{\partial}{\partial \theta_2} \left( \frac{\partial J_?(\theta)}{\partial \theta_1} \right) \neq \frac{\partial}{\partial \theta_1} \left( \frac{\partial J_?(\theta)}{\partial \theta_2} \right).
\end{equation}

The consequence of this asymmetry is that the contrapositive of the Clairaut-Schwarz theorem \cite{shwartz1873asymmetry} implies that if $J_?$ exists, it must not be continuously twice differentiable.
However, consider the remaining terms in the second order partial derivative:

\begin{align*}
    \frac{\partial}{\partial \theta_1} \left( \frac{\partial J_?(\theta)}{\partial \theta_1} \right) &= \frac{\partial}{\partial \theta_1} \left( \gamma \sigma(\theta_2) \frac{\partial \sigma(\theta_1)}{\partial \theta_1} \right) \\
    &= \gamma \sigma(\theta_2) \frac{\partial^2 \sigma(\theta_1)}{\partial \theta_1^2} \\
    \frac{\partial}{\partial \theta_2} \left( \frac{\partial J_?(\theta)}{\partial \theta_2} \right) &= \frac{\partial}{\partial \theta_2} \left( \sigma(\theta_1) \frac{\partial \sigma(\theta_2)}{\partial \theta_2} \right) \\
    &= \sigma(\theta_1) \frac{\partial^2 \sigma(\theta_2)}{\partial \theta_2^2}.
\end{align*}

Thus, we have constructed all of the second order partial derivatives.
As the sigmoid function, $\sigma$, is itself continuously twice differentiable, we see that $\forall \theta \in \mathbb R^2, i, j: \partial^2 J_?(\theta) / \partial \theta_i \partial \theta_j$ is continuous.
Therefore, if $J_?$ exists, it is continuously twice differentiable.
However, we showed using the Clairaut-Schwarz theorem \cite{shwartz1873asymmetry} that $J_?$ is not continuously twice differentiable.
Therefore, we have derived a contradiction.

\subsection*{Additional Literature Review}

Here we include our analysis of the papers associated with algorithms from \texttt{stable-baselines} which did not include clearly erroneous claims regarding $\jq$.
However, several of the papers included claims that we argue are nevertheless misleading, some more so than others.
Due to space limitations, we are forced to omit our analysis of DDPG and TRPO.

\subsection*{ACER (ICLR 2017) (299 Citations)}

ACER \cite{wu2017scalable} combines experience replay with actor-critic methods, improving sample efficiency. The background section contains the following text:

\begin{quote}
    The parameters $\theta$ of the differentiable policy $\pi_\theta(a_t|x_t)$ can be updated using the discounted approximation to the policy gradient \cite{sutton2000policy}, which borrowing notation from \citet{schulman2015high}, is defined as:

    \begin{equation}
        g = \mathbb E_{x_0:\infty, a_0:\infty} \left[\sum_{t \geq 0} A^\pi(x_t, a_t) \nabla_\theta \log \pi_\theta(a_t|x_t) \right].
    \end{equation}    
    
    Following Proposition 1 of \citet{schulman2015high}, we can replace $A^\pi(x_t, a_t)$ in the above expression with the state-action value $Q^\pi(x_t, a_t)$, the discounted return $R_t$, or the temporal difference residual \\ $r_t + \gamma V^\pi(x_{t+1}) - V^\pi(x_t)$, without introducing bias.
\end{quote}
We note that $g$ is exactly $\jq$.
The authors claim that any of the given choices may be used ``without introducing bias.''
We would argue that a naive reader is likely to assume that this means that $g$ is unbiased with respect to some objective.
The authors were actually making the subtler point that the given choices do not introduce bias relative to the choice of $A^\pi$,  which is itself biased.
This claim is not erroneous, but at the same time, information is left out that is important for a clear understanding.
The correctness hinges on ambiguity rather than precision, and a reader is likely to come away with the opposite impression.
For this reason, we argue that this section is still misleading.

\subsection*{PPO (arXiv 2017, 1769 citations)}

Proximal Policy Optimization (PPO) \cite{schulman2017proximal} is arguably the most popular deep actor-critic algorithm at this time due to its speed and sample efficiency.
The paper contains the text:

\begin{quote}
    Policy gradient methods work by computing an estimator of the policy gradient and plugging it into a stochastic gradient ascent algorithm. The most commonly used gradient estimator has the form

    \begin{equation}
        \hat g = \hat{\mathbb E}[\nabla_\theta \log \pi_\theta(a_t|s_t) \hat A_t]
    \end{equation}
    
    where $\pi_\theta$ is a stochastic policy and $\hat A_t$ is an estimator of the advantage function at timestep $t$. Here the expectation $\hat{\mathbb E}[\dots]$ indicates the empirical average over a finite batch of samples, in an algorithm that alternates between sampling and optimization.
    

    
    

\end{quote}
In this case, the authors considered a very specific setup where an algorithm ``alternates between sampling and optimization.''
They construct an objective that operates on a given batch of data and is used only for a single optimization step.
They do not relate this objective to a global objective.
The final algorithm does follow a direction resembling $\hat g$, with a number of optimization tricks.
For this reason, we did not consider the claims above made to be misleading.
However, we note again that the issues with $\jq$ were sidestepped rather than addressed directly.

\subsection*{TD3 (ICML 2018, 247 citations)}

Twin Delayed Deep Deterministic policy gradient (TD3) \citet{fujimoto2018addressing} is another paper in the DDPG family.
The paper was published concurrently with SAC and contains similar enhancements.
It contains the text:

\begin{quote}
    The return is defined as the discounted sum of rewards $R_t = \sum_{i=t}^T\gamma^{i - t}r(s_i, a_i)$, where $\gamma$ is a discount factor determining the priority of short-term rewards.
    In reinforcement learning, the objective is to find the optimal policy $\pi_\phi$, with parameters $\pi$, which maximizes the expected return $J(\pi) = \mathbb E_{s_i \sim p_\pi, a_i\sim\pi}[R_0]$.
    For continuous control, parameterized policies $\pi_\phi$ can be updated by taking the gradient of the expected return $\nabla_\phi J(\pi)$.
    In actor-critic methods, the policy, known as the actor, can be updated through the deterministic policy gradient algorithm:
    
    \begin{equation}
        \nabla_\phi J(\phi) = \mathbb E_{s \sim p_\pi}[\nabla_a Q^\pi(s, a)|_{a = \pi(s)} \nabla_\phi \pi_\phi(s)].
    \end{equation}

    $Q^\pi(s, a) = \mathbb E_{s_i \sim p_\pi, a_i\sim\pi}[R_t|s, a]$, the expected return when performing action $a$ in state $s$ and following $\pi$ after, is known as the critic or the value function.
\end{quote}

The authors did not define $p_\pi$, leaving the above expression ambiguous.
In the original DDQN paper, it was defined as the discounted state distribution.
It is misused here in the definitions of $J$ and $Q$, in that its not clear what role the discounted state distribution plays in the definition of $Q$ in either case.
The algorithm eventually computes the sample gradient by averaging over samples drawn from a replay buffer: $\nabla_\phi J(\phi) = N^{-1} \Sigma \nabla_a Q_{\theta_1}(s, a) |_{a=\pi_\phi(s)}\nabla_\phi \pi_\phi(s)$, the deterministic policy gradient form of $\jq$.

\bibliographystyle{ACM-Reference-Format}  
\bibliography{main.bbl}  

\end{document}